\theoremstyle{definition}
\newcolumntype{Y}{>{\centering\arraybackslash}X}
\newcommand{\x}{\mathbf{x}}
\newcommand{\lc}{\mathbf{c}}
\newtheorem{prop}{Proposition}
\newtheorem{cond}{Condition}
\title{TPLA: Tensor Parallel Latent Attention for Efficient Disaggregated Prefill \& Decode Inference}
\author{Xiaojuan Tang$^{1,3}$\thanks{Equal contribution.}, Fanxu Meng $^{1,3}$\footnotemark[1], Pingzhi Tang$^{1}$, Yuxuan Wang$^{1}$, Di Yin$^{3}$, Xing Sun$^{3}$, Muhan Zhang$^{1,2}$\thanks{Corresponding author: \texttt{muhan@pku.edu.cn}} \\
  $^{1}$Institute for Artificial Intelligence, Peking University \\
  $^{2}$State Key Laboratory of General Artificial Intelligence, BIGAI \\
  $^{3}$Tencent Youtu Lab, Shanghai, China\\
  \centering\href{https://github.com/fxmeng/TransMLA}{https://github.com/fxmeng/TransMLA}}
\begin{document}
\maketitle
\begin{abstract}
Multi-Head Latent Attention (MLA), introduced in DeepSeek-V2, compresses key–value states into a low-rank latent vector $\mathbf{c}^{\mathrm{KV}}$, caching only this vector to reduce memory. In tensor parallelism (TP), however, attention heads are computed across multiple devices, and each device must load the full $\mathbf{c}^{\mathrm{KV}}$, eroding the advantage of MLA over Grouped Query Attention (GQA).
We propose Tensor-Parallel Latent Attention (TPLA): a scheme that partitions both the latent representation and each head’s input dimension across devices, performs attention independently per shard, and then combines results with an all-reduce. TPLA preserves the benefits of a compressed KV cache while unlocking TP efficiency. Unlike Grouped Latent Attention (GLA), every head in TPLA still leverages the full latent representation, maintaining stronger representational capacity.
TPLA is drop-in compatible with models pre-trained using MLA: it supports MLA-style prefilling and enables efficient tensor-parallel decoding without retraining. Applying simple orthogonal transforms—e.g., the Hadamard transform or PCA—before TP slicing further mitigates cross-shard interference, yielding minimal accuracy degradation.
By reducing the per-device KV cache for DeepSeek-V3 and Kimi-K2, we achieve \(\mathbf{1.79\times}\) and \(\mathbf{1.93\times}\) speedups, respectively, at a 32K-token context length while maintaining performance on commonsense and LongBench benchmarks. TPLA can be implemented with FlashAttention-3, enabling practical end-to-end acceleration.
\end{abstract}
\begin{figure*}[th]
\centering
  \includegraphics[width=\linewidth]{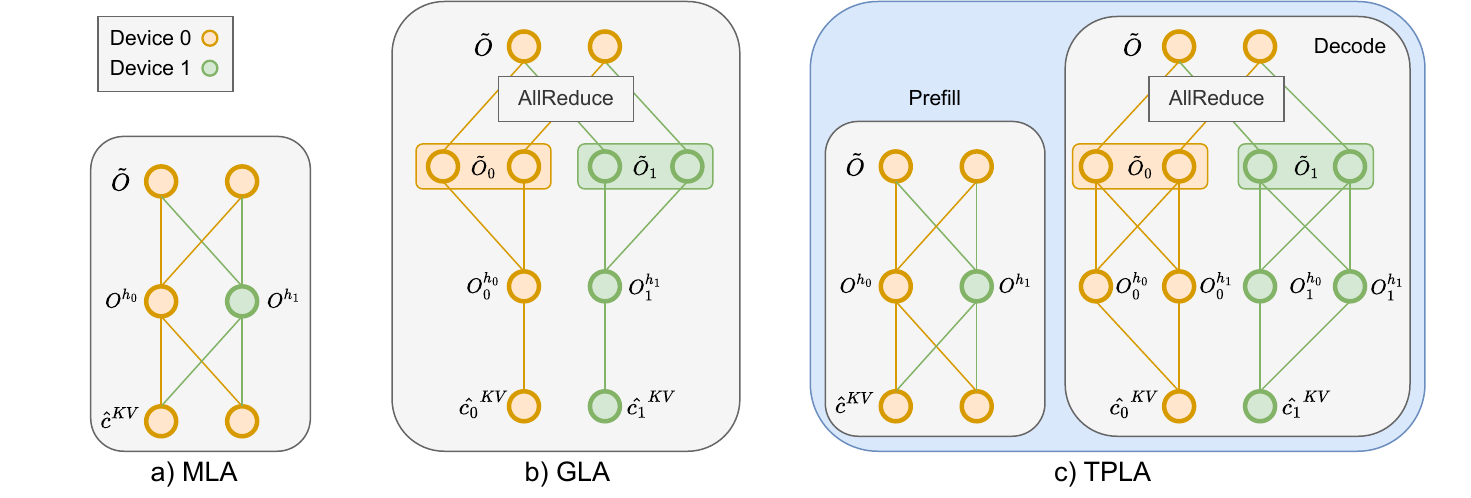}
  \caption{Comparison of MLA, GLA, and TPLA. In MLA, each device must load the entire KV cache. In GLA, each attention head only accesses the portion of the KV cache stored on its own device. In TPLA, the prefilling phase follows MLA for efficiency and accuracy, while during the decoding phase, attention heads are distributed across devices, each relying on the KV cache stored locally on its assigned device.}
\label{fig:tpla}
\end{figure*}
\section{Introduction}
\label{sec:intro}
Currently, large language models (LLMs) \cite{gpt4o, claude35sonnet, team2024gemini, radford2018improving, brown2020language} are typically memory-bound (limited by memory bandwidth) rather than compute-bound (limited by floating-point operations per second, FLOPs) during inference. To address this, KV cache compression \cite{chichih2024palu, chichih2025xkv, matanel2024transformers, zirui2024kivi} and tensor parallelism \cite{muru2025ladderresidual, hanbyul2025spd, itay2025tensorparallelism, qingyuan2024flash, shaden2022using} have emerged as two critical techniques for enabling efficient auto-regressive decoding in LLMs.
KV cache compression methods prune/merge/share/quantize intermediate key–value pairs to reduce memory overhead.
Tensor parallelism addresses memory and compute limitations by splitting large tensors—such as weight matrices—across multiple devices, enabling intra-layer parallel computation for models that cannot fit on a single GPU.
GQA \cite{ainslie2023gqa} inherently supports both KV cache compression and tensor parallelism by grouping query heads so that all heads within a group share a common set of key and value representations, which facilitates efficient distribution across multiple devices.
Both theoretical analyses and empirical results demonstrate that the representational capacity of GQA is inferior to that of MLA \cite{meng2025transmla,dsvii}. MLA introduces a pre-trained KV cache compression strategy that achieves an excellent trade-off between computational efficiency and model performance. However, when multiple attention heads are computed in parallel across multiple devices using tensor parallelism, MLA encounters a critical limitation: each device must load the full latent vector $c_{KV}$, undermining the memory savings that MLA offers over GQA. For example, in LLaMA-3-70B \cite{llama3}, the dimension of the KV cache per token is 2 $\times$ 8 $\times$ 128 = 2048, and under tensor parallelism with \text{TP} = 4, each device holds a partitioned KV cache of size 512. In contrast, Deepseek-V3 \cite{dsviii} has a fixed KV cache dimension of 64 + 512 = 576, which must be fully replicated on each device regardless of the parallelism degree. This results in a higher per-device KV cache memory footprint compared to GQA-based models under the same tensor parallel configuration.

GLA \cite{ted2025hardwareefficient} was proposed to address the tensor parallelism limitations of MLA by dividing the attention heads and latent representations into $g$ groups (typically $g$ = 2), such that each group of heads only loads its corresponding latent representation. However, this paper identifies two key limitations of GLA: (1) the reduction in KV cache size for single device comes at the cost of decreased representational capacity for each attention head; and (2) GLA requires training from scratch, which demands significant computational resources to validate its effectiveness.

To address these challenges, we propose Tensor Parallel Latent Attention (TPLA), a method that distributes the latent representations across multiple devices. Each attention head is split across devices, followed by an all-reduce operation on the output $o$. TPLA offers the following advantages:
1) Each attention head utilizes the full latent representation, preserving strong representational capacity;
2) Each device only loads a partition of the KV cache, improving inference speed under tensor parallelism;
3) TPLA can directly load pre-trained DeepSeek checkpoints, which incurs only a minor performance drop that is easily recovered;
4) We use reparameterized MLA for prefill and TPLA for decoding, reducing prefill latency while mitigating conversion-induced degradation.
4) TPLA can be viewed as a special case of GLA with more attention heads, making it compatible with FlashAttention-3.
\section{Related Works}
\paragraph{Reducing KV-Cache Memory}
Generative inference with large language models (LLMs) is often constrained by the memory footprint of the key--value (KV) cache, especially for long contexts. Several families of techniques have been explored to mitigate this burden: 
\textbf{token pruning/evicting} \cite{suyu2023model,xiabin2024dynamickv,xiaolin2025compresskv,yuhong2024snapkv,qichen2024lazyllm,zhenyu2023h2o} removes KV entries for low-importance tokens based on saliency or attention estimates;
\textbf{token merging} \cite{zheng2024model,xin2025zsmerge,jie2025efficient} aggregates nearby or similar tokens into a single surrogate KV representation to eliminate redundancy while retaining context; 
\textbf{cross-layer KV sharing/fusion} \cite{yifei2024kvsharer,shashank2024inferencefriendly,haoyi2024layercondensed,you2024a,william2024reducing} reuses one KV cache across adjacent layers to avoid per-layer storage; 
\textbf{low-rank KV compression} \cite{rongzhi2024lorc,bokai2024matryoshkakv,hao2024effectively,chichih2024palu} factorizes KV matrices into low-rank components (learned or SVD-based) to reduce dimensionality and memory; 
and \textbf{KV-cache quantization} \cite{coleman2024kvquant,zongwu2025million,shichen2024qaq,dingyu2025tailorkv} stores K/V tensors at reduced numeric precision (e.g., int8 or int4), cutting memory and bandwidth with modest accuracy cost. Although effective, these approaches inevitably discard or alter information in the KV cache and can degrade model performance. In contrast, \textbf{TPLA} leaves the KV contents intact: it reduces the amount of cache each device must hold so the model retains full information while alleviating memory pressure. As a result, TPLA tends to preserve accuracy better than compression-based methods.

\paragraph{Parallelism Strategies for Deployment}

Current LLMs scale to billions of parameters; to cope with the resulting memory and compute demands, engineers adopt distributed deployment to reduce wall-clock latency and time costs. 
\textbf{Data parallelism} \cite{dean2012large,sergeev2018horovod} partitions input data across the sample or batch dimension while replicating model parameters across devices. However, for very large models full replication becomes impractical; moreover, variable sequence lengths introduce load imbalance (``bubbles'') that waste compute resources.
\textbf{Pipeline parallelism} \cite{huang2019gpipe,narayanan2019pipedream} partitions the model into contiguous blocks of layers, each placed on a different device. Intermediate activations and gradients are communicated between stages to complete the forward and backward passes, reducing cross-node traffic. This staging overlaps computation across devices to increase throughput, but pipeline bubbles can still leave some devices idle.
\textbf{Tensor parallelism} \cite{shoeybi2019megatron,qifan2021an,zhengda2021maximizing} splits linear layers along their row or column dimensions, sharding tensors across devices and performing distributed matrix--matrix multiplication with collective communication. It archieves optimal performance on systems where GPUs are fully interconnected via NVLink. TPLA leverages the strengths of TP while addressing MLA’s inability to reduce the KV cache under TP.
Long sequences inflate the memory footprint of intermediate activations; \textbf{sequence parallelism} \cite{shenggui2021sequence} mitigates this by replicating the model across devices and splitting inputs along the sequence dimension so that each device processes only a subsequence.
\textbf{Prefill/Decode Separation} \cite{dynamo2025,zhao2025sandwich,zhong2024distserve} refines sequence parallelism for LLM inference: the prefilling phase is compute-intensive and thus compute-bound, whereas the decoding phase has low per-token compute but frequent memory accesses and is memory-bandwidth-bound. To match these characteristics, different machine counts and architectures are used across the two phases to improve latency and throughput. In TPLA, we further employ different model structures across phases---MLA during prefill to preserve accuracy while reducing computation (improving latency), and TPLA during decoding to reduce memory traffic and increase throughput.
\section{Preliminary}
\label{sec:preliminary}
\subsection{Multi-Head Latent Attention}
\label{sec:MLA}
MLA is designed to reduce memory bandwidth overhead by compressing the Key-Value (KV) cache. Specifically, the multi-head keys and values are compressed into a single low-rank latent representation of dimension $4d_h$, denoted as $\lc^{\mathrm{KV}}$. Instead of reconstructing full-size keys and values from this latent representation, MLA adopts a more efficient decoding strategy. By isolating the Rotary Position Embedding (RoPE) operation, the up-projection matrix can be absorbed into the query activations, yielding $Q$. Similarly, the value projection is absorbed into the output projection matrix, resulting in $W^{VO}$ (See Section~\ref{sec:absorb}). This allows for direct attention computation between $Q$ and the normalized latent cache $\hat{\lc}^{\mathrm{KV}}$, followed by a projection through $W^{VO}$ to produce the final output $\tilde{O}$. For simplicity in this initial description, we omit the RoPE components. The core computation is as follows:
\begin{align}
& c^{\mathrm{KV}}\in\mathbb{R}^{B\times L\times4d_h},\quad\;\;
\hat{\lc}^{\mathrm{KV}} = \mathrm{RMSNorm}(\lc^{\mathrm{KV}})\in\mathbb{R}^{B\times L\times4d_h},\nonumber\\
&Q\in\mathbb{R}^{B\times 1\times h_q\times4d_h},\quad
W^{VO}\in\mathbb{R}^{\bigl(h_q\cdot 4d_h \bigr)\times D},\nonumber\\
&O =\mathrm{softmax}\;\!\Bigl(\frac{Q\,(\hat{\lc}^{\mathrm{KV}})^\top}{\sqrt{d_h}}\Bigr)\,\hat{\lc}^{\mathrm{KV}}\in\mathbb{R}^{B\times 1\times h_q\times4d_h},\label{eq:mla_softmax}\\
&\tilde{O} = O\,W^{VO}\in\mathbb{R}^{B\times 1\times D}.
\label{eq:mla}
\end{align}

\subsection{Grouped Latent Attention}
\label{sec:GLA}

During tensor-parallel decoding, MLA replicates its single latent head on every device, resulting in high KV-cache memory load across all devices. GLA avoids this replication by partitioning the latent KV cache itself.
Consider a two-way tensor-parallel configuration. The latent KV cache is divided into two shards, $\lc_{\bm{0}}^{\mathrm{KV}}$ and $\lc_{\bm{1}}^{\mathrm{KV}}$, each assigned to one GPU. Simultaneously, attention heads $h_q$ are grouped such that the absorbed query projection matrix $Q$ and output projection matrix $W^{VO}$ are partitioned along both the head dimension ($h_q$) and the feature dimension ($4d_h$), yielding four groups. Thus, GPU 0 operates on $(\lc_{\bm{0}}^{\mathrm{KV}}, Q_{\bm{0,0}}, W_{\bm{0,0}}^{VO})$, while GPU 1 operates on $(\lc_{\bm{1}}^{\mathrm{KV}}, Q_{\bm{1,1}}, W_{\bm{1,1}}^{VO})$. Each GPU independently computes its local attention output, denoted $\tilde{O}_{\bm{0}}$ and $\tilde{O}_{\bm{1}}$, respectively. The final output is obtained via an $\mathrm{AllReduce}$ operation that sums the local outputs across devices.
\begin{align}
&\lc_{\bm{0}}^{\mathrm{KV}},\lc_{\bm{1}}^{\mathrm{KV}}\in\mathbb{R}^{B\times L\times2d_h},
\begin{cases}
\hat{\lc_{\bm{0}}}^{\mathrm{KV}} = \mathrm{RMSNorm} (\lc_{\bm{0}}^{\mathrm{KV}})\in\mathbb{R}^{B\times L\times2d_h},\nonumber\\
\hat{\lc_{\bm{1}}}^{\mathrm{KV}} = \mathrm{RMSNorm}(\lc_{\bm{1}}^{\mathrm{KV}})\in\mathbb{R}^{B\times L\times2d_h},
\end{cases}
\nonumber\\
&Q_{\bm{i,j}\in\{0,1\}}\in\mathbb{R}^{B\times 1\times \tfrac{h_q}{2}\times2d_h},\quad
\begin{pmatrix}
    Q_{\bm{0,0} },Q_{\bm{0,1}}\nonumber\\
    Q_{\bm{1,0}},Q_{\bm{1,1}}
\end{pmatrix} = Q,\nonumber\\
&W_{\bm{i,j}\in \{0,1\}}^{VO} \in \mathbb{R}^{\left(\tfrac{h_q}{2}\cdot 2d_h \right)\times D}, \quad\
W^{VO} = \begin{pmatrix}
    W_{\bm{0,0}}^{VO},\, W_{\bm{0,1}}^{VO} \,\nonumber\\
    W_{\bm{1,0}}^{VO}, \,W_{\bm{1,1}}^{VO}
\end{pmatrix}, \notag
\end{align}

\begin{align}
&O_{\bm{0}}=\mathrm{softmax}\;\!\Bigl(\frac{Q_{\bm{0,0}}\,(\hat{\lc}_{\bm{0}}^{\mathrm{KV}})^\top}{\sqrt{d_h}}\Bigr)\,\hat{\lc}_{\bm{0}}^{\mathrm{KV}}\in\mathbb{R}^{B\times 1\times \tfrac{h_q}{2}\times2d_h},\nonumber\\
&O_{\bm{1}} =\mathrm{softmax}\;\!\Bigl(\frac{Q_{\bm{1,1}}\,(\hat{\lc}_{\bm{1}}^{\mathrm{KV}})^\top}{\sqrt{d_h}}\Bigr)\,\hat{\lc}_{\bm{1}}^{\mathrm{KV}}\in\mathbb{R}^{B\times 1\times \tfrac{h_q}{2}\times2d_h},\nonumber \\
    &\tilde{O_{\bm{0}}} = O_{\bm{0}}\,W_{\bm{0,0}}^{VO}\in\mathbb{R}^{B\times 1\times D},\;\tilde{O_{\bm{1}}} = O_{\bm{1}}\,W_{\bm{1,1}}^{VO}\in\mathbb{R}^{B\times 1\times D},\nonumber\\
&O = \mathrm{AllReduce}\;\!\Bigl(\tilde{O_{\bm{0}}} + \tilde{O_{\bm{1}}}\Bigr)\in\mathbb{R}^{B\times 1\times D}.
\end{align}
\subsection{Matrix Absorption}
\label{sec:absorb}
Considering we apply orthogonal transformations $U$ to reparameterize weight matrices, which involves matrix absorption. To make this process intuitive, we here present the complete calculation pipeline of MLA and show how the absorbed matrices from Section~\ref{sec:MLA} are derived.

As stated in Section~\ref{sec:MLA}, MLA saves KV cache by multiplying the low-rank compress matrix $W^{DKV} \in \mathbb{R}^{D \times 4d_h}$ with the input sequence $X \in \mathbb{R}^{B \times L \times D}$ to obtain low-rank latent features $\mathbf{c}^{\mathrm{KV}}$.
Then, it uses the matrices $W^{UK}, W^{UV} \in \mathbb{R}^{4d_h \times (h_q \cdot d_h)}$ to derive the full-heads key $\mathbf{k}$ and value $\mathbf{v} $.
Additionally, MLA also can decompose $W^{Q} \in \mathbb{R}^{D \times (h_q \cdot d_h)}$ to $W^{DQ} \in \mathbb{R}^{D \times r_q} $ and $W^{UQ} \in \mathbb{R}^{r_q \times (h_q \cdot d_h)}$, which reduces the activation memory during training.
For positional embedding, MLA uses a decoupled RoPE strategy that uses additional multi-head queries $\mathbf{q}^\mathrm{PE}$ and a shared key $\mathbf{k}^\mathrm{PE}$, generated by $W^{QR}  \in \mathbb{R}^{r_q \times ( h_q \cdot d_r )} $ and $W^{KR} \in \mathbb{R}^{ D \times d_r }$, to carry the rotary positional embeddings. The final attention output $\tilde{O}$ is computed by separately combining the non-positional part ($\mathbf{q}\,\mathbf{k}^\top$) and positional part ($\mathbf{q}^\mathrm{PE}\,{(\mathbf{k}^{\mathrm{PE}}})^\top$), followed by projection with $W^O \in \mathbb{R}^{(h_q \cdot d_h) \times D}$.
\begin{align}
    &\mathbf{c}^{\mathrm{KV}} = X W^{DKV}, \quad \mathbf{c}^{\mathrm{Q}} = X W^{DQ} , \quad \hat{\mathbf{c}}^{\mathrm{KV}}=\mathrm{RMSNorm}(\mathbf{c}^{\mathrm{KV}}), \notag \\
    & \mathbf{q}  = \mathbf{c}^{Q}\,W^{UQ}  ,\quad\quad \mathbf{k} = \hat{\mathbf{c}}^{\mathrm{KV}}\,W^{UK} ,\ \; \mathbf{v}  = W^{UV} \hat{\mathbf{c}}^{\mathrm{KV}}, \quad  \notag \\
    &  \mathbf{q}^{\mathrm{PE}} = \mathrm{RoPE}(\mathbf{c}^{Q}\,W^{QR}), \quad\quad\quad\quad\;\; \mathbf{k}^{\mathrm{PE}} = \mathrm{RoPE}(X\,W^{KR}),  \notag\\ &O=\mathrm{softmax}\Bigl(\frac{\mathbf{q}\,\mathbf{k}^\top + \mathbf{q}^\mathrm{PE}\,({\mathbf{k}^{\mathrm{PE}}})^\top }{\sqrt{d_h + d_r }}\Bigr)\, \mathbf{v} , \quad \tilde{O} = O\,W^O.  \label{eq:isolate_rope}
\end{align}

In Equation~\ref{eq:isolate_rope}, the RoPE component is explicitly isolated, allowing us to restructure the attention computation using associativity of matrix multiplication. For clarity, we can temporarily omit positional encoding components and the scaling factor.
\begin{align}
    O\,W^{O}&=\mathrm{softmax}(\mathbf{q}\,\mathbf{k}^\top)\, \mathbf{v}\,W^O \notag\\
    &=\mathrm{softmax}(\mathbf{q}\,(\hat{\mathbf{c}}^{\mathrm{KV}} W^{UK})^\top)\, \hat{\mathbf{c}}^{\mathrm{KV}} W^{UV}\,W^O \label{eq:absorb_rmsnorm} \notag\\
    &=\mathrm{softmax}(Q({\hat{\mathbf{c}}^{\mathrm{KV}}})^\top)\, \hat{\mathbf{c}}^{\mathrm{KV}} W^{VO}.
\end{align}
Here, the matrix $W^{UK}$ can be absorbed into $\mathbf{q}$ to derive $Q$ in Equation~\ref{eq:mla}. Similarly, the matrix $W^{UK}$ can be absorbed into $W^O$. In practice, however, $W^{UV}$ is typically not absorbed into $W^O$ to avoid generating an impractically large matrix.

\newpage
\section{Tensor Parallel Latent Attention (TPLA)}
\label{sec:tpla}
Motivated by the hardware efficiency of GLA, we retain its core principle of distributing latent KV across GPUs to mitigate memory wastage and communication overload. However, directly translating an existing MLA-based model to GLA incurs a significant performance penalty, as shown in Figure~\ref{fig:mla_gla_tpla}. This degradation stems from a key limitation in standard GLA: the latent vector within each group only accesses half of the query heads, restricting the model's expressive power and leading to suboptimal performance. Moreover, training a new GLA model from scratch requires a substantial cost. To address this, we further propose Tensor-Parallel Latent Attention (TPLA). Unlike standard GLA, TPLA partitions latent vectors into two groups while preserving full query heads visibility. Specifically,
\begin{align}
&\lc_{\bm{0}}^{\mathrm{KV}},\lc_{\bm{1}}^{\mathrm{KV}}\in\mathbb{R}^{B\times L\times2d_h},
\begin{cases}
    \hat{\lc_{\bm{0}}}^{\mathrm{KV}} = \mathrm{RMSNorm} (\lc_{\bm{0}}^{\mathrm{KV}})\in\mathbb{R}^{B\times L\times2d_h}, \nonumber\\
\hat{\lc_{\bm{1}}}^{\mathrm{KV}} = \mathrm{RMSNorm}(\lc_{\bm{1}}^{\mathrm{KV}})\in\mathbb{R}^{B\times L\times2d_h},
\end{cases}\nonumber\\
&Q_{\bm{0}},Q_{\bm{1}}\in\mathbb{R}^{B\times 1\times h_q\times2d_h},\quad\quad\quad
\begin{pmatrix}
    Q_{\bm{0}}, Q_{\bm{1}}
\end{pmatrix} = Q,\nonumber\\
&W_{\bm{0}}^{VO},W_{\bm{1}}^{VO}
  \in\mathbb{R}^{\bigl(h_q\cdot 2d_h \bigr)\times D},\quad\;
  \begin{pmatrix}
    W_{\bm{0}}^{VO}, W_{\bm{1}}^{VO}
\end{pmatrix} = W^{VO},\nonumber\\
&O_{\bm{0}}=\mathrm{softmax}\;\!\Bigl(\frac{Q_{\bm{0}}\,(\hat{\lc}_{\bm{0}}^{\mathrm{KV}})^\top}{\sqrt{d_h}}\Bigr)\,\hat{\lc}_{\bm{0}}^{\mathrm{KV}}\in\mathbb{R}^{B\times 1\times h_q\times2d_h},\nonumber\\
&O_{\bm{1}}=\mathrm{softmax}\;\!\Bigl(\frac{Q_{\bm{1}}\,(\hat{\lc}_{\bm{1}}^{\mathrm{KV}})^\top}{\sqrt{d_h}}\Bigr)\,\hat{\lc}_{\bm{1}}^{\mathrm{KV}}\in\mathbb{R}^{B\times 1\times h_q\times2d_h},\label{eq:tpla_softmax_one_device}\\
&\tilde{O_{\bm{0}}} = O_{\bm{0}}\,W_{\bm{0}}^{VO}\in\mathbb{R}^{B\times 1\times D},\quad\quad
\tilde{O_{\bm{1}}} = O_{\bm{1}}\,W_{\bm{1}}^{VO}\in\mathbb{R}^{B\times 1\times D},\nonumber\\
&O = \mathrm{AllReduce}\;\!\Bigl(\tilde{O_{\bm{0}}} + \tilde{O_{\bm{1}}}\Bigr)\in\mathbb{R}^{B\times 1\times D}.
\end{align}

This design ensures each latent vector attends to all query heads, mitigating the most performance loss. The residual performance loss now stems exclusively from tensor-parallel partitioning effects in RMSNorm and softmax operations. Through carefully designed mathematical reparameterization, TPLA can restore near-MLA performance. For illustration, we consider the case where the tensor-parallel degree of latent attention is 2, though the approach naturally scales to higher degrees.

\subsection{RMSNorm Slicing}
\label{sec:rms_slicing}
In MLA-like models, the ``kv\_a\_layernorm'' module normalizes input vectors using the Root Mean Square (RMS) value. Given an input vector $\x \in \mathbb{R}^d$ (e.g., $d=4d_h$), the RMSNorm is computed as:
\begin{align}
    \text{RMS}(\x) &= \sqrt{ \frac{1}{d} \sum_{i=1}^{d} x_i^2 + \epsilon } \nonumber\\
    &=\sqrt{\frac{1}{d} \|\x\|_2^2+\epsilon}, 
\end{align}
\begin{align}
    \text{RMSNorm}(\mathbf{\gamma},\x) &=  \frac{\x}{\text{RMS}(\x)}\odot \mathbf{\gamma}\nonumber\\
    &=\text{RMSNorm}(\mathbf{1}, \x)\odot \mathbf{\gamma},
\end{align}
where $\epsilon$ is a small constant for numerical stability; $\gamma \in \mathbb{R}^d$ is a learned scaling parameter and $\odot$ denotes element-wise multiplication. 

However, we face the following challenge when applying this to tensor-parallel processing of latent attention: When input latent vector $\x \in \mathbb{R}^d$ is split into two partitions, $\x^{(0)} \in \mathbb{R}^{d/2}$ and $\x^{(1)} \in \mathbb{R}^{d/2}$, across different devices, the RMS computation on each local device uses only half the original dimension ($d/2$), while the true normalization requires the full $\text{RMS}(\x)$ over dimension $d$.

To resolve this discrepancy, we introduce an orthogonal transformation $U \in \mathbb{R}^{d \times d}$ ($U\,U^\top = \mathbf{I}$) to reparamerize this module. Before introducing the conditions that this transformation $U$ need satisfy, we first establish that RMSNorm can, in principle, be realized in a mathematically equivalent form under any orthogonal transformation.
\begin{prop}
\label{prop:rmsnorm}
\begin{equation}
    \text{RMSNorm}(\mathbf{1},\lc) = \text{RMSNorm}(\mathbf{1},\lc\,U)\,U^\top
\end{equation}
\end{prop}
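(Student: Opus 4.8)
The plan is to unfold the definition on the right-hand side and exploit the single fact that an orthogonal map preserves the Euclidean norm: this makes the scalar RMS factor invariant under $U$, after which the trailing $U^\top$ cancels against the $U$ inside. First I would write out the right-hand side using $\text{RMSNorm}(\mathbf{1},\x) = \x/\text{RMS}(\x)$ together with $\text{RMS}(\x)=\sqrt{\tfrac{1}{d}\|\x\|_2^2+\epsilon}$, giving
\[
\text{RMSNorm}(\mathbf{1},\lc\,U)\,U^\top \;=\; \frac{\lc\,U}{\text{RMS}(\lc\,U)}\,U^\top .
\]

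The central step is to establish $\text{RMS}(\lc\,U)=\text{RMS}(\lc)$. Since $U\,U^\top=\mathbf{I}$, the squared norm is preserved,
\[
\|\lc\,U\|_2^2 \;=\; (\lc\,U)(\lc\,U)^\top \;=\; \lc\,U\,U^\top\,\lc^\top \;=\; \lc\,\lc^\top \;=\; \|\lc\|_2^2 ,
\]
and because the feature dimension $d$ and the constant $\epsilon$ are unchanged, the scalar $\text{RMS}(\lc\,U)$ equals $\text{RMS}(\lc)$. Substituting this equality, pulling the (now $U$-independent) scalar out front, and cancelling by orthogonality yields
\[
\frac{\lc\,U}{\text{RMS}(\lc)}\,U^\top \;=\; \frac{\lc\,(U\,U^\top)}{\text{RMS}(\lc)} \;=\; \frac{\lc}{\text{RMS}(\lc)} \;=\; \text{RMSNorm}(\mathbf{1},\lc),
\]
which is the claim.

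There is no genuine obstacle here; the only point needing care is the bookkeeping when $\lc$ is a batched matrix rather than a single row vector. In that case RMSNorm acts row-wise along the feature axis of length $d$ while $U$ multiplies on that same axis from the right, so the norm-preservation identity holds independently for each row and the argument above goes through verbatim. I would also flag that the derivation uses only $U\,U^\top=\mathbf{I}$ and the $\gamma=\mathbf{1}$ setting: the learned gain $\gamma$ generally does not commute through $U^\top$, which is exactly why Proposition~\ref{prop:rmsnorm} is stated for $\mathbf{1}$ and the gain is reintroduced separately afterward.
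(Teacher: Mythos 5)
Your proposal is correct and follows essentially the same route as the paper's proof: both hinge on norm preservation under the orthogonal map (so $\text{RMS}(\lc\,U)=\text{RMS}(\lc)$) followed by cancellation via $U\,U^\top=\mathbf{I}$, and both identify non-commutativity of the gain $\gamma$ as the reason the statement requires $\gamma=\mathbf{1}$. The only difference is notational: the paper packages the per-row RMS factors into a diagonal matrix $D_c$ (writing $\text{RMSNorm}(\gamma,\lc)=D_c\,\lc\,W_\gamma$) to handle the $L\times d$ batched case, whereas you argue on a single row and note the batched case follows row-wise, which is equivalent.
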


\begin{proof}
we first represent the RMSNorm process as matrix multiplication. Let $\mathbf{c} \in \mathbb{R}^{L \times d}$ be the input latent vector (for simplicity, we omit the batch size), we obtain: 
\begin{align}
    \text{RMSNorm}(\mathbf{\gamma}, \mathbf{c}) &=\text{RMSNorm}(\mathbf{1}, \mathbf{c})\, W_\gamma,\label{eq:rmsnorm}\\
    &= D_c \,\mathbf{c}\, W_\gamma,
\end{align}
where $D_c$ is a diagonal matrix of size $L \times L$ with the reciprocal of the RMS values on the diagonal and $W\gamma$ is also a diagonal matrix of size $d \times d$ with each learnable scaling parameter:
\begin{align}
    &D_c = \text{diag}\,\bigl(\frac{1}{\text{RMS}(c_1)},\frac{1}{\text{RMS}(c_2)}, \ldots, \frac{1}{\text{RMS}(c_L)}\,\bigr),\\
    &W_\gamma = \text{diag}\,\bigl( \gamma_1, \gamma_2, \ldots, \gamma_d \,\bigr).
\end{align}
Since the orthogonal transformation preserves the norm ($\|\mathbf{c}\,U\|_2^2 = \|\mathbf{c}\|_2^2$), we easily have $\text{RMS}(\mathbf{c})=\text{RMS}(\mathbf{c}\,U)$, i.e., $D_{cU} = D_c$. Thus, we can have:
\begin{equation}
\label{eq:rmsnorm_eq}
\text{RMSNorm}\,(\gamma,\mathbf{c}\,U\,)\,U^\top  = D_{c}\,\mathbf{c}\,U\,W_\gamma\, U^\top.
\end{equation}
Matrix multiplication does not satisfy the commutative property. Therefore, when and only when $W_\gamma = \mathbf{I}$, we can further prove:
\begin{align}
    \text{RMSNorm}(\mathbf{1},\mathbf{c}\,U)U^\top  &= D_{c}\,\mathbf{c}\,U\,\mathbf{I}\, U^\top \notag\\ &= D_{c}\,\mathbf{c} \notag\\&=  \text{RMSNorm}(\mathbf{1},\mathbf{c}).
\end{align}
\end{proof}
Give by Equation~\ref{eq:absorb_rmsnorm}, Equation~\ref{eq:rmsnorm} and Proposition~\ref{prop:rmsnorm}, we can absorb $W_\gamma$ into up-projection matrix $W^{UKV} = (W^{UK}, W^{UV})$ to achieve the $\gamma=\mathbf 1$, ensuring the orthogonal transformations $U$ to $\mathbf c$ with keeping the RMSNorm value no change. In addition, the $U^T$ can be further absorbed into $W^{UKV}$; $U$ can be absorbed into $W^{DKV}$, yielding the reparameterized weight matrix:
\begin{equation}
    W^{UKV}_{new} =  U^\top\,W_\gamma\,W^{UKV}, \quad W^{DKV}_{new} = W^{DKV}\,U.
\end{equation}
We have proved that any transformation $U$ can ensure the equivalence of RMSNorm. Now we will define some conditions that serve as the computational basis for $U$, deferring the specific calculation method to a later section.
\begin{cond}[RMSNorm Slicing Condition]
\label{cond:rmsnorm_slicing}
\begin{equation}
    \alpha \|(\mathbf{c}\,U)_{\mathbf 0}\|^2_2 \approx \beta \|(\mathbf{c}\,U)_{\mathbf 1}\|^2_2 \approx \|\mathbf{c}\,U\|^2_2 = \|\mathbf{c}\|^2_2.
\end{equation}
\end{cond}
Here, $\alpha$ and $\beta$ are fixed constants, invariant to changes in the input data distribution (How to calculate their specific values is detailed in Section~\ref{sec:reparameterization}). $(\mathbf{c}\,U)_{\mathbf 1}$ and $(\mathbf{c}\,U)_{\mathbf 2}$ are the two partitions of the transformed $\mathbf{c}$ split across devices. By satisfying this, the new RMS values computed from the two partitions are proportional to the global value, thereby providing an accurate approximation of the global RMSNorm:
\begin{align}
    \text{RMS}(\mathbf{c}) &=\sqrt{\frac{1}{d} \|\lc\|_2^2+\epsilon} \notag \\
    &\approx \sqrt{\frac{\alpha}{d} \|(\lc\,U)_{\bm{0}}\|_2^2+\epsilon} \notag \\
    &\approx \sqrt{\frac{\alpha}{{2}}} \,\text{RMS}\left((\mathbf{c}\,U)_\mathbf{0}\right) \approx \sqrt{\frac{\beta}{{2}}}\,\text{RMS}\left((\mathbf{c}\,U)_\mathbf{1}\right).
\end{align}
Thus, we can compute RMSNorm in a tensor-parallel manner while maintaining the integrity of the normalization process.





\subsection{Softmax Slicing}
\label{sec:softmax_slicing}

In common tensor-parallel techniques, matrices are typically split across devices to perform either row or column parallelism. For our TPLA attention score computation, row parallelism is employed, where the weight matrix A is split across devices according to its rows. To ensure a valid matrix multiplication, the input matrix X is correspondingly partitioned column-wise into $X_1$ and $X_2$, such that
\begin{equation}
    XA = \begin{pmatrix} X1 & X2 \end{pmatrix} \begin{pmatrix} A1 \\ A2 \end{pmatrix} = X1 \cdot A1 + X2 \cdot A2 = Y1 + Y2 = Y
\end{equation}
where $X_1$ and $A_1$ are computed on GPU 0 to produce $Y_1$, and $X_2$ and $A_2$ are computed on GPU 1 to produce $Y_2$. The final output $Y$ is then all-reduced by summing $Y_1$ and $Y_2$.

In the context of softmax computation (Equation~\ref{eq:isolate_rope}), TPLA partitions $\mathbf{c}^{\mathrm{KV}}$ and ensures that the computation of positional components remains unaffected. Specifically, the shard of the key positional embedding ${\mathbf{k}^{\mathrm{PE}}}$ must be replicated across devices so that the local positional values remain consistent with the global values. As for non-positional parts $(Q\,(\hat{\mathbf{c}}^{\mathrm{KV}})^\top)$, the latent vectors $\mathbf{c}$ are split into two devices, and each device performs only its local computation, i.e., GPU 0 computes $Q_{\bm{0}}\,(\hat{\lc}^{\mathrm{KV}}_{\bm 0})^\top$, while GPU 1 computes $Q_{\bm{1}}\,(\hat{{\lc}}^{\mathrm{KV}}_{\bm 1})^\top$. However, in most cases,
\begin{align}
     &\quad\,\mathrm{softmax}\bigl(Q\,(\hat{\mathbf{c}}^{\mathrm{KV}})^\top + \mathbf{q}^\mathrm{PE}\,({\mathbf{k}^{\mathrm{PE}}})^\top\bigr) \notag \\&
     = \mathrm{softmax}\bigl( Q_{\mathbf{0}}\,(\hat{\lc}_{\mathbf{0}}^{\mathrm{KV}})^\top + Q_{\mathbf{1}}\,(\hat{\lc}_{\mathbf{1}}^{\mathrm{KV}})^\top + \mathbf{q}^\mathrm{PE}\,({\mathbf{k}^{\mathrm{PE}}})^\top\bigr) \notag \\&
     \neq \mathrm{softmax}\bigl(Q_{\mathbf{0}}\,(\hat{\lc}_{\mathbf{0}}^{\mathrm{KV}})^\top + \mathbf{q}^\mathrm{PE}\,({\mathbf{k}^{\mathrm{PE}}})^\top\bigr)\quad \quad \quad \quad \quad \text{[GPU 0]}\notag \\&\neq \mathrm{softmax}\bigl(Q_{\mathbf{1}}\,(\hat{\lc}_{\mathbf{1}}^{\mathrm{KV}})^\top + \mathbf{q}^\mathrm{PE}\,({\mathbf{k}^{\mathrm{PE}}})^\top\bigr) \quad \quad \quad \quad \quad \text{[GPU 1]}\notag
\end{align}

Thus, the challenge of TPLA is how to ensure 
the global value of $Q\,\lc^{\mathrm{KV}})^\top$ can be approximated from local computations. We first show that applying any orthogonal transformation $U$ does not alter the equivalence of the original softmax output. Based on Equation~\ref{eq:absorb_rmsnorm}, we easily have:
\begin{align}
\label{eq:absorb_softmax}
    Q\,(\hat{\lc}^{\mathrm{KV}})^\top  =  QU\,(\hat{\lc}^{\mathrm{KV}}U)^\top &= \mathbf{q}\,(U^\top\,W^{UK})^\top (\hat{\mathbf{c}}^{\mathrm{KV}}\,U)^\top\notag\\
    &=Q^\prime(\hat{\mathbf{c}}^{\mathrm{KV}}\,U)^\top
\end{align}
Analogous to Section~\ref{sec:rms_slicing}, by absorbing $U$ into $W^{UKV}$ and $W^{DKV}$ (equivalently, into $Q$ to obtain $Q^\prime$), we can impose an orthogonal transformation $U$, which preserves the original softmax computation and must satisfy: 
\begin{cond}[Softmax Slicing Condition]
\label{cond:softmax_slicing}
\begin{equation}
    Q^\prime\,(\hat{\mathbf{c}}^{\mathrm{KV}}\,U)^\top \approx \mu Q^\prime_{\bm 0}\,(\hat{\lc}^{\mathrm{KV}}\,U)_{\bm{0}}^\top \approx  \nu Q^\prime_{\bm{1}}\,(\hat{\lc}^{\mathrm{KV}}\,U)_{\bm{1}}^\top
\end{equation}
\end{cond}
Accordingly, by determining the coefficients $\mu$ and $\nu$, each device can perform its local computation and scale by the factor, thereby approximating the global value.



\subsection{Reparameterization Methods}\label{sec:reparameterization}
From the derivation above, we need to find one orthogonal transformation matrix $U$ applied to projection weights, ensuring that the transformation satisfies Condition~\ref{cond:rmsnorm_slicing} and Condition~\ref{cond:softmax_slicing} — local computations can accurately approximate the global RMSNorm and softmax values. To achieve this, we explore two potential methods: Hadamard Matrix Transformation and Principal Component Analysis (PCA).

\subsubsection{Hadamard Matrix Transformation}

Hadamard matrix is a special orthogonal matrix where each entry is either +1 or -1. It operates by balancing the numbers, thereby reducing extreme numerical deviations and promoting a more uniform distribution of data. In practice, we typically use the function \texttt{scipy.linalg.hadamard(d)} to generate a Sylvester-type Hadamard matrix (also known as a Walsh-Hadamard matrix) $H_d \in \mathbb{R}^{d \times d}$, constructed using a deterministic recursive rule:
\begin{equation}
H_{2n} =
\begin{pmatrix}
H_{n} & H_{n} \\
H_{n} & -H_{n}
\end{pmatrix},
\quad H_1 = (1).
\end{equation}

To increase robustness, a random diagonal matrix $D$, with entries drawn from ${\pm 1}$ is multiplied with $H_d$, thereby breaking deterministic structure while preserving orthogonality. Since $H_d H_d^\top = d \cdot\mathbf{I}$, orthonormality is achieved by scaling $H_d$ by $\tfrac{1}{\sqrt{d}}$, ensuring that normalization values are preserved.

Take an illustrative example. Consider a 4-dimensional vector $\mathbf{c} = (100, 0, 0, 0)$ and the $4 \times 4$ Hadamard matrix $H_4$. The transformed vector $\mathbf{c}^\prime = \mathbf{c} H_4$ is:
\begin{equation}
    \mathbf{c}^\prime = (100,0,0,0) \begin{pmatrix}
    \frac{1}{2} & \frac{1}{2} & \frac{1}{2} & \frac{1}{2} \\ \frac{1}{2} & -\frac{1}{2} & \frac{1}{2} & -\frac{1}{2} \\ \frac{1}{2} & \frac{1}{2} & -\frac{1}{2}& -\frac{1}{2} \\ \frac{1}{2} & -\frac{1}{2} & -\frac{1}{2} & \frac{1}{2}
    \end{pmatrix}=(50, 50, 50, 50).
\end{equation}

When applied to an input vector $\mathbf{c}$: $\mathbf{c}^\prime= \mathbf{c}\,H_d, \,d=4d_h$, we obtain: $\frac{\|(\mathbf{c}H_d)_1\|_2^2}{d/2} \approx  \frac{\|(\mathbf{c}H_d)_2\|_2^2}{d/2} \approx \frac{\|\mathbf{c}H_d\|_2^2}{d} = \frac{\|\mathbf{c}\|_2^2}{d}$, satisfies our key Condition~\ref{cond:rmsnorm_slicing} and easily determine $\alpha=2$. This uniformity minimizes approximation error in tensor-parallel RMSNorm, validated experimentally in Figure~\ref{fig:mla_gla_tpla}. 

However, satisfying Condition~\ref{cond:softmax_slicing} is more challenging. While the magnitudes of the Hadamard transformed vector elements are balanced, due to the presence of both positive and negative signs, this transformation does not guarantee that the multiplication of the two parts will be approximately clear.
To illustrate this, consider the following example: let $Q = (100, 0, 0, 0)$ and $\mathbf{c} = (0, 0, 80, 0)$. After applying the Hadamard transformation, we have:
\begin{align}
    &Q^\prime = Q H_4 = (50, 50, 50, 50), \quad \notag\\ &\mathbf{c}^\prime = \mathbf{c} H_4 = (40, 40, -40, -40). \notag
\end{align}
The element-wise product is:
$ Q \cdot \mathbf{c}^\prime = (200, 200, -200, -200).$ When this product is split into two parts, we get: $400 \neq -400 \neq 0$. This demonstrates that a standard Hadamard transformation cannot ensure Condition~\ref{cond:softmax_slicing}. A potential direction to address this issue is to search for an optimized Hadamard matrix via dimension permutations that minimizes the discrepancy between partitions. We leave the investigation of such optimized transformations for future work.

\subsubsection{Principal Component Analysis (PCA)}
\label{sec:PCA}
PCA is a widely used technique in statistics and machine learning for dimensionality reduction, feature extraction, etc. It transforms a dataset into a new coordinate system such that the greatest variances of the data are captured along the new axes (principal components). Each subsequent component is orthogonal to (i.e., uncorrelated with) the preceding ones.
In our context, we leverage this property to project data onto orthogonal dimensions, with the eigenvalues indicating the variance captured along each eigenvector and thus reflecting the statistical importance of each dimension. Moreover, for mean-centered features, the variance is equivalent to mean of the squared values, closely related to squared RMS value.

To implement this, we process a calibration dataset (e.g., Wikitext-2) to collect the KV latent cache (excluding position features) represented by $F \in \mathbb{R} ^{( B \cdot L) \times d}$. We then compute the eigenvectors $U$ and eigenvalues $\Lambda$ by performing eigenvalue decomposition on the covariance matrix $\Sigma_F = U\Lambda U^\top$. 

Based on Condition~\ref{cond:rmsnorm_slicing}, we define $\alpha$ as proportion of variance captured by the first $d/2$ principal components. Similarly, $\beta$ represents the proportion of variance captured by the remaining components. These ratios are as follows:
\begin{equation}
    \alpha = \frac{\sum_{i=1}^{d/2}{\lambda_i}}{\sum_{i=1}^{d}{\lambda_i}},\quad \beta = \frac{\sum_{i=d/2}^{d}{\lambda_i}}{\sum_{i=1}^{d}{\lambda_i}}.
\end{equation}
For Condition~\ref{cond:softmax_slicing}, the metrics $\mu$ and $\nu$ are defined in the same manner, making them equivalent to $\alpha$ and $\beta$, respectively.


\subsection{TPLA as a Special Case of GLA}\label{sec:tpla_as_gla}
Tensor parallelism in GLA employs a two-dimensional sharding scheme, splitting both head axis $h_q$ and the latent dimension axis $4d_h$ across devices. For a query tensor $Q \in \mathbb{R}^{B \times L \times h_q \times 4d_h}$, this partitioning yields four logical sub-tensors:
\begin{equation*}
    Q = \begin{pmatrix} Q_{\boldsymbol{0,0}} & Q_{\boldsymbol{0,1}} \\ Q_{\boldsymbol{1,0}} & Q_{\boldsymbol{1,1}} \end{pmatrix}, \quad
    \text{where} \quad Q_{\boldsymbol{i,j}} \in \mathbb{R}^{B \times L \times \tfrac{h_q}{2} \times 2d_h}.
\end{equation*}
In standard GLA, they are distributed with only two devices. Thus, only two diagonal blocks can be materialized locally—one per device—without additional communication:
\begin{equation*}
    \begin{cases}
    \text{Device 0: } Q_{\bm{0,0}} \in \mathbb{R}^{B \times L \times\frac{h_q}{2} \times 2d_h}, \\
    \text{Device 1: } Q_{\bm{1,1}} \in \mathbb{R}^{B \times L \times\frac{h_q}{2}\times 2d_h}.
    \end{cases}
\end{equation*}
 Each latent slice (of width $2d_h$) is paired with only half of the query heads ($h_q/2$) and thus unable to access the off-diagonal head slices $Q_{\bm{1,0}}$ and $Q_{\bm{0,1}}$. In effect, these parts do not contribute to the computation, resulting in significant performance degradation.

In contrast, TPLA overcomes this limitation by enabling each partitioned latent vector to attend to all query heads. It achieves this by reformulating the computation to be algebraically equivalent to a GLA system with double the number of heads. Concretely, define a conceptual query tensor $Q^\prime$ that duplicates the original query along the head dimension:
\begin{equation*}
    Q' = \begin{pmatrix}
    Q_{\bm{0,0}} & Q_{\bm{0,1}} \\ Q_{\bm{1,0}} & Q_{\bm{1,1}} \\
    Q_{\bm{0,0}} & Q_{\bm{0,1}} \\ Q_{\bm{1,0}} & Q_{\bm{1,1}}
    \end{pmatrix}
    \;\in\; \mathbb{R}^{B \times L \times (2h_q) \times (4d_h)},
\end{equation*}
For $h_q$ original heads, TPLA's duplication additional creates $h_q$ heads. This is algebraically equivalent to a GLA system with $2h_q$ heads $4d_h$ latent dimension. Thus, we can perfectly follow the same as TPLA sharding way. When split into two device, we have:
\begin{equation*}
    \begin{cases}
    \text{Device 0: } \begin{bmatrix} Q_{\bm{0,0}} & Q_{\bm{0,1}} \end{bmatrix} \in \mathbb{R}^{B \times L \times h_q \times 4d_h} \\
    \text{Device 1: } \begin{bmatrix} Q_{\bm{1,0}} & Q_{\bm{1,1}} \end{bmatrix} \in \mathbb{R}^{B \times L \times h_q \times 4d_h}
    \end{cases}
\end{equation*}

Generalizing to $k$ devices, let $g$ denote the TPLA replication factor (number of latent-cache-slice groups). TPLA divides $k$ devices into $r$ group size of size $k/g$. Each group holds a disjoint slice of the latent axis of width $4d_h/g$ and replicates the complete set of head parameters. Within each group, the head axis is sharded across the $k/g$ devices, Consequently, each device processes $\frac{h_q}{k/g}$ heads, and $\frac{4d_h}{g} $ \text{latent features}.  For $g=2,\,k=2$, this recovers the two-device case above, where each device receives $h_q$ heads and $2d_h$ latent width. The computational complexity arising from parameter replication is analyzed in Section~\ref{sec:pd_sep}.
 
In summary, because TPLA preserves GLA's sharding pattern—differing only by a constant-factor replication of head parameters—state-of-the-art attention optimizations (e.g., FlashAttention-3) can be applied to TPLA without substantial changes to the underlying framework. 


\subsection{Prefill-Decode Separation}\label{sec:pd_sep}
Large language model inference is usually into two phases with distinct performance characteristics: \textit{prefill} and \textit{decode}. The prefill phase processes the entire prompt in a single, parallel pass to compute the initial Key-Value (KV) cache. This large-batch computation is fundamentally compute-bound. The subsequent decode phase autoregressively generates one token at a time. Each generation step requires reading the entire, growing KV cache from high-bandwidth memory (HBM) to on-chip SRAM. As the context length increases, this large data transfer becomes the primary bottleneck, making the decode phase memory-bound.

Our proposed technique, TPLA, addresses this challenge by reducing the KV cache size on each device. This reduction effectively alleviates the memory bandwidth bottleneck at the cost of a minor increase in computation. In essence, TPLA shifts the decode phase from being memory-bound towards being more compute-bound. A detailed analysis is as below.

\paragraph{Complexity Analysis of TPLA}
To maximize the degree of tensor parallelism and enable acceleration methods compatible with GLA, TPLA requires the replication of head-specific parameters across latent attention groups, as discussed in Section~\ref{sec:tpla_as_gla}. Specifically, let's analyze the case with a tensor parallelism (TP) degree of 2. We consider a hidden state $X \in \mathbb{R}^{L_q \times D}$ for a single-batch inference using the MLA-absorbing strategy. The dominant cost lies in the attention computation. For a KV cache of length $S_\mathrm{{kv}}$, the complexity of the TPLA attention module (Equation~\ref{eq:tpla_softmax_one_device}) is approximately $\mathcal{O}(L_q \times S_{KV} \times h_q \times 2d_h \times 2 )$. In comparsion, for MLA (Equation~\ref{eq:mla_softmax}), with TP=2, the heads are split into two groups of $\frac{h_q}{2}$, leading to a complexity of $\mathcal{O}(L_q \times S_{KV} \times \frac{h_q}{2} \times 4d_h \times 2 )$. These two complexities are arithmetically equivalent. (Strictly speaking, positional components introduce additional overhead, since TPLA doubles the number of heads without reducing the RoPE dimension, but this effect is relatively minor.) Similarly, the $\tilde{O}$ computations are also equivalent. 

Beyond the main attention computation, TPLA modifies other calculations: the computation of $\mathbf{c}^{\mathrm{KV}}$ in TPLA is distributed across two devices, reducing complexity by $\mathcal{O}(L_q \times D \times 2d_h \times 2)$, while the computation of $Q$ increases by $\mathcal{O}(L_q \times D \times 2h_q \times d_h)$. However, as context length grows, the overall cost is increasingly dominated by the self-attention module (see Figure~\ref{fig:prefill_latency}).

To mitigate the additional computational overhead of TPLA, we also strategically decouple the attention mechanisms: retaining standard MLA during compute-intensive prefilling to minimize computation and reduce loss caused by converting MLA to TPLA, while activating TPLA exclusively during memory-bound decoding to minimize KV cache footprint. This hybrid approach thereby further optimizes performance by matching each phase to its most suitable mechanism.

\paragraph{Discussion}
The above analysis focuses on MLA-absorbing computation. However, when constructing the full-size KV cache and performing attention, TPLA doubles the number of heads while maintaining the same head dimension $d_h$. As a result, the overall computational load increases, making training TPLA from scratch potentially costly. Designing effective and efficient training strategies for TPLA remains an open problem, which we leave for future work. Nevertheless, one practical pathway is to adopt the existing MLA design during training and then convert it to TPLA with only minimal loss. This approach allows us to retain training efficiency while still benefiting from TPLA's advantages in inference scenarios.


\begin{table*}[t]
\centering
\caption{WikiText-2 Perplexity and Commonsense reasoning performance when converting the MLA to TPLA. The six benchmarks include MMLU, ARC (easy and challenge), PIQA, HellaSwag, OpenBookQA (OBQA), and Winogrande(WG).}
\begin{tabularx}{0.99\linewidth}{lYYYYYYYYY}
\toprule
\textbf{Model} & \textbf{PPL$\downarrow$} & \textbf{Avg.$\uparrow$} & \textbf{MMLU} & \textbf{ARC} & \textbf{PIQA} & \textbf{HellaSwag} & \textbf{WG} & \textbf{OBQA} \\
\midrule
\rowcolor{gray!10}DeepSeek-V2-Lite &6.31& 61.75 &43.19 &60.39 &80.20 &74.46 &65.43 &45.80   \\
\textit{- GLA}  &2212. & 33.77&25.32 &26.77 &51.47 &25.65&49.88&23.60 \\
\textit{- TPLA}  &7.24&54.33 &37.67 &51.50 &75.46 &63.56  &59.19 &38.60\\
\textit{- TPLA (align) }  &6.51&61.52&42.72&62.58&79.82&73.32&65.90&44.80 \\
\textit{- TPLA (pd sep.)} &6.31 &61.44 &43.19 &60.14 &80.09 &74.41 &65.59 &45.20 \\
\midrule
\rowcolor{gray!10}DeepSeek-V2  &3.89& 68.32&51.91&69.09&83.13&82.17&74.03&49.60\\
\textit{- TPLA}  &4.72&63.40&47.19&65.04&80.69&75.46&66.61&45.40 \\
\midrule
\rowcolor{gray!10}DeepSeek-V3  &3.24&72.10  &60.85&77.16 &85.58 &85.41&75.22&48.40 \\
\textit{- TPLA}  &4.02&68.00&54.88&75.25&82.70&80.69&69.46&45.00 \\
\midrule
\rowcolor{gray!10}Kimi-K2-Base   &1.91 &73.52 &63.20 &78.75 &85.47 &87.55 &75.93 &50.20\\
\textit{- TPLA} &2.44 &70.49 &57.64 &76.00 &83.79 &83.53 &72.38 &49.60\\
\midrule
\rowcolor{gray!10}LLaMA-2-7B  &5.47& 59.85 & 41.43 & 59.24 & 78.40 & 73.29 & 64.96 & 41.80 \\
\textit{- TransMLA} &5.88&58.95&40.38 &57.64&78.18&70.59 & 62.90 & 44.00 \\
\textit{- TPLA }&6.74& 54.68  &36.12  &53.21 &74.81 &64.52 &59.04 &40.40\\
\bottomrule
\end{tabularx}
\label{tab:cs}
\end{table*}

\section{Experiment}
\label{sec:exper}

An advantage of TPLA over GLA \cite{ted2025hardwareefficient} is that TPLA can be applied without training a model from scratch. It allows direct loading of models originally trained with MLA (e.g., the DeepSeek series \cite{dsvi,dsvii,dsviii}, Kimi-k2 \cite{team2025kimi}, TransMLA \cite{meng2025transmla}), and—through our proposed reparameterization method and Prefill/Decode Separation technology—mitigates performance degradation caused by changes in the attention mechanism. 

\subsection{Performance on Commonsense Tasks}
\label{sec:experiment_cs}
In this section, we evaluate TPLA by directly loading MLA checkpoints \emph{without any additional training} on short-text commonsense tasks. Performance is measured with the LightEval framework on MMLU \cite{MMLU}, ARC (Easy/Challenge) \cite{ARC}, PIQA \cite{PIQA}, HellaSwag \cite{HS}, OpenBookQA (OBQA) \cite{OBQA}, and WinoGrande (WG) \cite{WG}. Results are reported in Table~\ref{tab:cs}.
For \textbf{GLA}, following the procedure in Section~\ref{sec:GLA}, we partition the attention heads into two groups, assigning each group half of the latent dimension. As shown in Table~\ref{tab:cs}, discarding half of each head’s KV cache causes severe performance degradation—for example, WikiText-2 perplexity (ppl) increases from 6.31 with MLA to 2212 with GLA—whereas \textbf{TPLA}, which allows each attention head to use the full latent dimension across different devices, maintains a ppl of 7.24. This comparison indicates that TPLA preserves MLA’s representational capacity while reducing the per-device KV-cache footprint. We therefore expect that pretraining TPLA from scratch would outperform GLA.
For \textbf{TPLA}, we first use WikiText-2 \cite{merity2016pointer} as a calibration set and, following Sections~\ref{sec:rms_slicing} and~\ref{sec:softmax_slicing}, slice the MLA components (the $KV_a$ RMSNorm and the softmax) to obtain TPLA weights. As shown in Table~\ref{tab:cs}, this requires no fine-tuning and yields only minor accuracy degradation. The reparameterization method used here is the PCA-based approach described in Section~\ref{sec:PCA}.
For \textbf{TPLA (align)}, we use the SmolLM-Corpus \cite{benallal2024smollmcorpus} for lightweight alignment. First, we match the layer-wise input/output features of TPLA to those of the original MLA model using 256 random samples of length 2{,}048 for 10 epochs, minimizing MSE with the Muon optimizer (initial learning rate $1\mathrm{e}{-6}$). Next, we align the end-to-end model outputs using 100M tokens, following the TransMLA setting (batch size $=32$, learning rate $=2\mathrm{e}{-5}$, warmup ratio $=0.03$, cosine scheduler, max sequence length $=4096$). Experiments are conducted on a node with $8\times$ GPUs (96\,GB per GPU, $\sim$148 FP16 TFLOPS each). This small amount of alignment data is sufficient to recover the performance of the converted model.
For \textbf{TPLA (PD-sep.)}, we use MLA in the prefilling stage with the same reparameterization but \emph{without} slicing the RMSNorm or softmax; prefilling thus behaves identically to the original MLA, and the KV cache can be partially reused by TPLA during decoding. By avoiding slicing for most tokens, this prefill–decode separation achieves performance close to the original model \emph{without any training}.
For \textbf{LLaMA-2-7B}, we first apply TransMLA \cite{meng2025transmla} to convert MHA/GQA to MLA (64 RoPE dimensions and 512 NoPE dimensions—corresponding to a pruning ratio of 92.97\%.) and then fine-tune to recover performance. We subsequently convert the MLA checkpoint released by TransMLA directly into TPLA. With TransMLA as a bridge, TPLA can be applied to pretrained models that originally use MLA, GQA, or MHA.

These experiments demonstrate that converting MLA-based models to TPLA can effectively preserve performance. Given the benefits of TPLA for tensor parallelism, this presents a promising approach for efficient model deployment and acceleration.

\begin{table*}[t]
\centering
\caption{Longbench performance when converting the MLA to TPLA. }
\begin{tabularx}{\linewidth}{lYYYYYYY}
\toprule
\textbf{Model} & \textbf{Avg.} & \textbf{MultiQA} & \textbf{SingleQA} & \textbf{Summarize} & \textbf{Few-Shot} & \textbf{Code} & \textbf{Synthetic} \\
\midrule
\rowcolor{gray!10}DeepSeek-V2-Lite   &28.90 &12.43 &20.04 &16.74 &62.59 &57.86 &3.77 \\
\textit{- TPLA }  &10.98 &6.96 &9.20 &6.91 &25.29 &14.41 &3.11\\
\textit{- TPLA (align)}  &22.60 &10.97 &14.67 &16.59 &58.03 &31.59 &3.77\\
\textit{- TPLA (pd sep.)}  &24.44 &13.95 &15.07 &8.62 &59.10 &46.67 &3.23\\
\midrule
\rowcolor{gray!10}DeepSeek-V3   &58.19 &55.37 &51.65 &23.97 &69.42 &80.09 &68.63 \\
\textit{- TPLA } &44.52 &35.02 &38.53 &12.55 &53.01 &61.20 &66.83 \\
\textit{- TPLA (pd sep.) } &56.04 &53.01 &50.17 &21.39 &67.22 &75.97 &68.47\\
\bottomrule
\end{tabularx}
\label{tab:longbench}
\end{table*}    

\subsection{Performance on Longbench Tasks}
\label{sec:experiment_longbench}
As context length grows, memory traffic increases and the KV-cache size becomes a primary driver of latency and throughput. To assess how TPLA converted from MLA behaves on long inputs, we evaluate on \emph{LongBench} \cite{bai2023longbench}, a bilingual (English/Chinese), multi-task benchmark for long-context understanding that comprises 21 tasks across six categories (e.g., question answering, summarization, and few-shot learning). Because long-text inference is slower, we report results only for DeepSeek-V2-Lite and DeepSeek-V3. Due to GPU memory constraints, the maximum input context length is set to 31,500 tokens for DeepSeek-V2-Lite and 127,500 tokens for DeepSeek-V3. For each task, the output length is kept the same as in the original paper.
The outcomes are summarized in Table~\ref{tab:longbench}. We observe that slicing errors in RMSNorm and softmax accumulate with sequence length, leading to some degradation on LongBench. \textbf{TPLA (align)} follows the same alignment recipe as in the previous section, but its effectiveness is limited because the alignment corpus is formed by concatenating short texts. In contrast, \textbf{TPLA (PD-sep.)} adopts a prefill--decode separation: MLA is used unchanged in the prefill stage (no slicing of RMSNorm/softmax), and the resulting KV cache is partially reused by TPLA during decoding, which reduces first-token latency and accuracy loss. On DeepSeek-V2-Lite, the training-free \textbf{TPLA (PD-sep.)} surpasses the aligned variant, and on DeepSeek-V3 the model retains strong long-form reasoning with only a modest average drop of \(2.15\%\). These small losses, compared with training from scratch, are likely recoverable with a small amount of additional training.

\subsection{Ablation Study}
\label{sec:ablation_study}

\subsubsection{Part 1}

We highlight two structural differences. (i) \emph{Per-head latent capacity:} GLA gives each attention head only half of the latent dimension, whereas TPLA preserves the full latent dimension per head. (ii) \emph{Prefill–decode (PD) separation:} during the compute-intensive prefill stage we keep the reparameterized MLA form \emph{without} splitting RMSNorm or softmax; during decoding, TPLA uses PD separation while consuming the prefill KV cache.
We analyze the results in Table~\ref{tab:cs} to quantify these effects:

\textbf{1) MLA $\rightarrow$ GLA conversion.} Directly converting MLA to GLA forces each attention head to access only half of its original latent representation, causing substantial information loss and a marked accuracy drop across all benchmarks.

\textbf{2) Prefill--decode separation.} Avoiding RMSNorm/softmax partitioning in prefill reduces approximation error for the vast majority of tokens. Moreover, the MLA reparameterization enables the prefill KV cache to be used directly by TPLA at decode time, improving both quality and efficiency.

\begin{figure*}[ht]
	\centering
    \includegraphics[width=\textwidth]{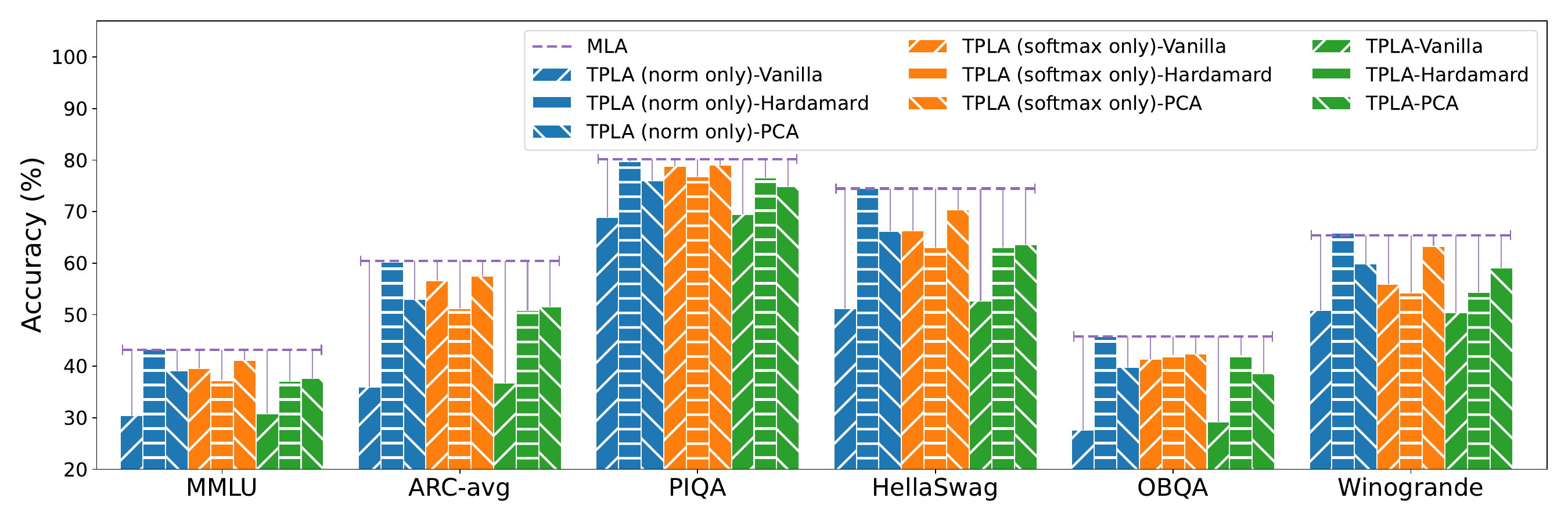}
    \caption{Accuracy across multiple benchmarks under different tensor-parallelism methods (indicated by colors) and reparameterization strategies (indicated by textures). 
    The purple horizontal line marks the original DeepSeek\mbox{-}V2\mbox{-}Lite accuracy, and the vertical bars show each method’s accuracy drop relative to this MLA baseline. \textbf{TPLA (norm only)} parallelizes RMSNorm across two devices, followed by an \texttt{allgather} before the softmax. 
    \textbf{TPLA (softmax only)} applies RMSNorm normally and parallelizes the softmax. 
    \textbf{TPLA} parallelizes both RMSNorm and softmax. 
    \textbf{Original} splits parameters evenly; 
    \textbf{Hadamard} balances parts prior to splitting; 
    \textbf{PCA} concentrates information into earlier dimensions before splitting. 
    To better visualize method-induced loss, TPLA results are reported without PD separation in this figure.}
    \label{fig:mla_gla_tpla}
\end{figure*}
\subsubsection{Part 2}

In Section \ref{sec:tpla}, we identified \textbf{RMSNorm} and \textbf{softmax} as the primary sources of error when converting MLA to TPLA. To mitigate this, we proposed two reparameterization strategies, \textbf{Hadamard-based} and \textbf{PCA-based}, to reduce the performance degradation introduced by parallelizing these components. This section presents ablation studies analyzing the impact of each reparameterization method on individual modules. Figure \ref{fig:mla_gla_tpla} presents ablation results analyzing the effectiveness of each method. The key findings are:

\textbf{1) Error ordering.} Empirically, slicing \textit{RMSNorm} incurs the least loss, slicing \textit{softmax} is worse, and slicing both is worst:

\textbf{2) TP on RMSNorm only}: The Hadamard-based method balances the norm computation across devices effectively, leading to performance comparable to the original MLA model on multiple tasks.

\textbf{3) TP on softmax only}: The PCA-based method concentrates information into the dimensions assigned to device 1, effectively preserving performance. In contrast, the Hadamard-based method fails to improve softmax accuracy. We hypothesize that the exponential nature of softmax makes it more sensitive to imbalance. Although Hadamard-based reparameterization achieves statistical balance across devices, small per-sample perturbations may result in significant asymmetries, adversely affecting final performance.

\textbf{4) TP over both RMSNorm and softmax.} When both components are parallelized, the PCA-based reparameterization consistently achieves the best performance. Consequently, we adopt this configuration for all experiments in the paper unless otherwise stated.

\subsection{Inference Speedup with TPLA}
\label{sec:speedup}

\subsubsection{Decoding Throughput}

LLM decoding is often \emph{memory-bandwidth bound}. TPLA splits each attention head’s input dimension across two devices, reducing per-device memory traffic and alleviating the bandwidth bottleneck. We evaluate the speedup of TPLA over MLA on two large models, \textbf{DeepSeek\mbox{-}V3\mbox{-}0324} (685B parameters) and \textbf{Kimi\mbox{-}K2\mbox{-}Base} (1T parameters). Because these models are extremely large and Mixture-of-Experts (MoE) routing can confound attention-speed effects, we remove MoE for timing. Both models are converted to \textbf{BF16}. All experiments use \textbf{FlashAttention-3} to ensure a fair comparison.

For \textbf{TPLA} with TP=2, the number of attention heads per device stays unchanged, while the latent dimension changes from $64{+}512$ to $(64{+}256)\times2$, so each device holds a 320-dimensional KV cache. For \textbf{MLA} with TP=2, the latent dimension is unchanged and heads are distributed across devices (e.g., DeepSeek-V3: $64$ heads $\times 2$; Kimi-K2: $32$ heads $\times 2$). For \textbf{MLA} with TP$>$2, we continue splitting along heads only. For \textbf{TPLA} with TP$>$2, we further split heads \emph{in addition to} halving the latent dimension; for example, with TP=4 on Kimi-K2-TPLA, we use $32$ heads $\times 2$ per device with a $320$-dimensional latent per head. In this setting, the per-device compute halves, while memory traffic matches TP=2; decoding remains memory-bound, so the speedup is similar to TP=2. Consequently, we report measurements on two GPUs.

Figure~\ref{fig:throughput} configures the maximum batch size at each context length. At a decoding length of $4096$, \textbf{TPLA} with $2d_h$ achieves up to \textbf{$\sim$2$\times$} the throughput of the single-head-latent \textbf{MLA} with $4d_h$, due to the smaller per-device KV cache. Our parallelization-friendly design raises peak throughput and is resilient under adverse serving loads. At a $32\mathrm{k}$ context length, \textbf{DeepSeek-TPLA} is \textbf{1.79$\times$} faster than MLA, and \textbf{Kimi-K2-TPLA} is \textbf{1.93$\times$} faster.

\begin{figure*}[ht]
	\centering
	\begin{subfigure}[b]{0.475\textwidth}
		\centering
		\includegraphics[width=0.95\textwidth]{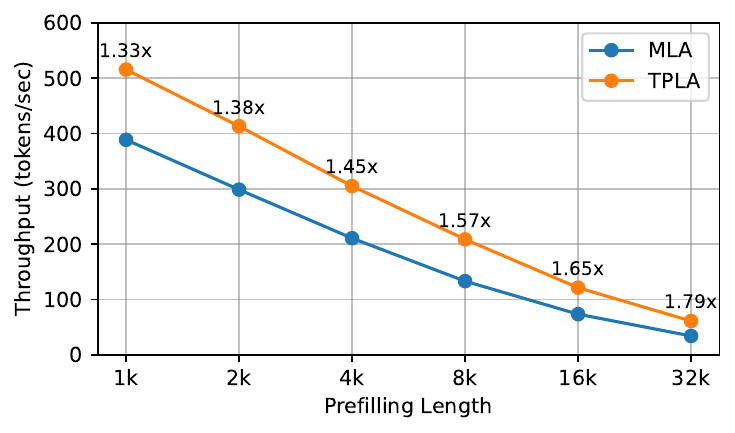}
		\caption{DeepSeek-V3-0324}
		\label{subfig:throughput}
	\end{subfigure}%
	\hfill
	\begin{subfigure}[b]{0.475\textwidth}
		\centering
		\includegraphics[width=0.95\textwidth]{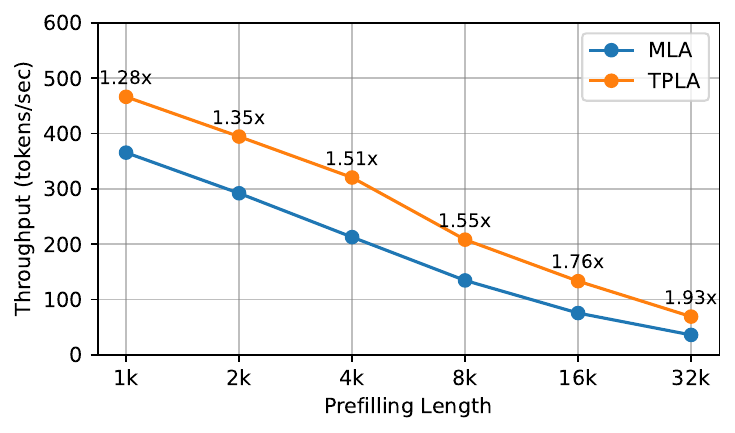}
		\caption{Kimi-K2-Base}
		\label{subfig:throughput_kimi}
	\end{subfigure}
	\caption{Throughout (Decoding) comparing MLA and TPLA.}
	\label{fig:throughput}
\end{figure*}

\begin{figure*}[ht]
	\centering
	\begin{subfigure}[b]{0.475\textwidth}
		\centering
		\includegraphics[width=0.95\textwidth]{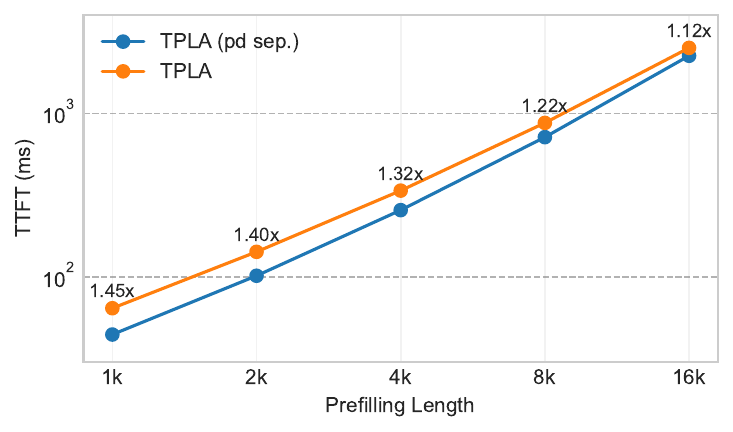}
		\caption{DeepSeek-V3-0324}
		\label{subfig:latency}
	\end{subfigure}%
	\hfill
	\begin{subfigure}[b]{0.475\textwidth}
		\centering
		\includegraphics[width=0.95\textwidth]{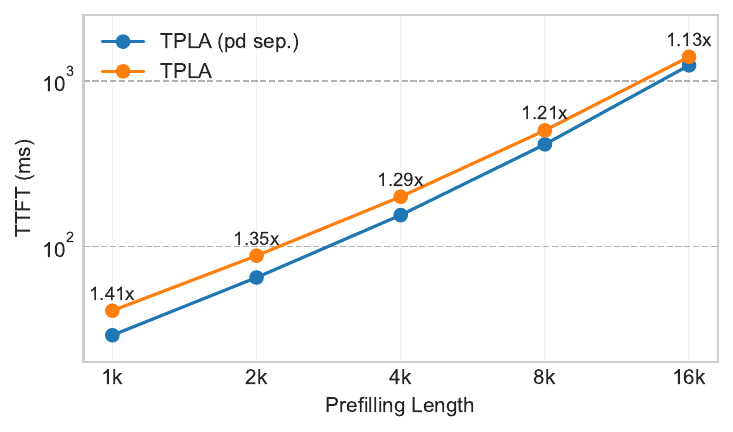}
		\caption{Kimi-K2-Base}
		\label{subfig:latency_kimi}
	\end{subfigure}
	\caption{Latency (TTFT) comparing TPLA and TPLA (pd sep.).}
	\label{fig:prefill_latency}
\end{figure*}

\subsubsection{Prefilling Latency}

The \emph{prefilling} stage of LLM inference is \emph{compute-bound}. Under TPLA’s TP separation, each device retains the original number of heads, whereas MLA can reduce heads per device by splitting them across devices. As a result, the original TPLA is not ideal for the compute-bound prefill stage. To address this, we introduce \textbf{TPLA (sep.)}: it applies the same reparameterization to MLA but \emph{does not} slice RMSNorm or softmax, thereby introducing no approximation error. During prefill, the structure matches MLA: under TP we do not change the latent dimension but partition heads across devices. This significantly reduces per-device compute and alleviates the compute bottleneck.

Figure~\ref{fig:prefill_latency} reports \textbf{TTFT} (Time to First Token) on two GPUs for MoE-removed DeepSeek\mbox{-}V3\mbox{-}0324 and Kimi\mbox{-}K2\mbox{-}Base. At a 1K prompt length, \textbf{TPLA (sep.)} is \textbf{1.4$\times$} faster than TPLA for both models. Given its accuracy-friendly design, this \emph{1.4$\times$} gain is essentially a “free lunch.”
\section{Conclusion, Limitation and Future Work}
\label{sec:conclusion}
We introduce TPLA, which combines the KV cache compression efficiency of MLA with strong compatibility for Tensor Parallelism. It can directly inherit checkpoints from MLA-pretrained models. With two proposed reparameterization techniques, it substantially reduces the loss incurred by converting the attention formulation; combined with PD separation, the training-free conversion error can be driven to a very small level. We evaluate TPLA on commonsense reasoning tasks and the more challenging LongBench benchmark, finding that it preserves the original model's performance well. Extensive ablations confirm the effectiveness of our TP slicing and reparameterization designs. TPLA achieves up to 2$\times$ improvement in throughput, and PD separation delivers up to 29\% latency reduction. Overall, TPLA shows strong potential as a powerful and efficient replacement for MLA.

 \paragraph{Limitation and Future Work.} 
 Although PCA demonstrates better performance over Hadamard transform, it has inherent limitations. Specifically, PCA concentrates most of the data’s informative content in the first few dimensions, which provide a representative summary of the global structure. In contrast, the later dimensions primarily capture negligible noise and minor variations that contribute minimally to the overall representation. Consequently, TPLA with group-partitions $g = 2$ can achieve good performance, but when $g > 2$, it probably fails to maintain effectiveness. By contrast, numerical-value balancing via orthogonal transforms, particularly the Hadamard transform, tends to be more effective when partitioning into multiple groups. Empirically, inserting a Hadamard transform into the RMSNorm slicing part yields almost no performance degradation. In future work, we will design and evaluate optimized Hadamard-like orthogonal matrices to balance softmax slicing, thereby improving both robustness and scalability. One advantage of TPLA is that it can directly inherit MLA checkpoints, but this also introduces some conversion errors. Our experiments fully validate TPLA’s expressive capacity and speed advantages. In future work, we will post-pretrain DeepSeek-V3, or train a TPLA-based model from scratch, to further demonstrate TPLA’s excellent expressiveness.

\bibliographystyle{unsrt}  
\bibliography{references}

\end{document}